\documentclass{article}
\usepackage{iclr2027_conference,times}

\usepackage{hyperref}
\usepackage{url}

\usepackage{amsmath,amssymb,amsthm}
\usepackage{booktabs}
\usepackage{graphicx}
\usepackage{float}
\usepackage{caption}
\usepackage{algorithm}
\usepackage{algpseudocode}
\usepackage[table]{xcolor}
\usepackage{tikz}
\usetikzlibrary{arrows.meta,matrix,positioning}
\usepackage{microtype}

\newcommand{\E}{\mathbb E}

\newtheorem{proposition}{Proposition}
\newcommand{\w}{\mathbf{w}}

\iclrfinalcopy

\title{Beyond One Epoch: Uncertainty-Weighted Sensitivity Regularization for Recommendation Models}

\author{Richard Lettich \\
Meta Platforms, Inc. \\
Menlo Park, CA, USA \\
\texttt{rlett@meta.com} \\
\And
Shagun Gupta \\
Meta Platforms, Inc. \\
Menlo Park, CA, USA}

\begin{document}

\maketitle
\thispagestyle{fancy}
\lhead{Preprint. }

\begin{abstract}
Recommendation models with sparse embeddings and a shared consumer often exhibit the \emph{one-epoch phenomenon}: a second epoch lowers training loss while sharply degrading generalization.
We present a view based on the violation of the prequential principle. On the first epoch, an example's label has not affected the embedding rows used to score it.
On later epochs, those rows contain a displacement induced by the label's earlier update.
This creates an incentive for the shared consumer to exploit this displacement in subsequent epochs, which fails to generalize.
We call this \emph{self-influence asymmetry}.
Using an exact scalar model and local influence analysis, we connect this mismatch to the uncertainty in the embeddings and the consumer's incentive to exploit it in subsequent epochs.
We verify this hypothesis using an embedding-consumer-update intervention in deep recommendation models
and propose uncertainty-weighted sensitivity regularization (UWSR) which counteracts this mismatch by augmenting the loss function to penalize the consumer for relying on uncertain embeddings.
Unlike existing remedies, UWSR preserves the learned embeddings, and across three benchmarks, four-epoch UWSR reduces test cross-entropy by \(1.38\%\)–\(6.78\%\) and improves AUC by  0.0058–0.0231 relative to one-epoch training.

\end{abstract}
\section{Introduction}
\emph{Recommendation models} score the likelihood of a user interacting with an item, such as predicting click-through rates for displayed content. Under the hood, these systems map high-dimensional categorical inputs, like user and item IDs, to learned vector representations stored in embedding tables. As updates apply only to entities active in a given batch of examples during training, these embedding lookups are inherently sparse. A shared consumer, usually a dense neural network, then combines these retrieved vectors with other features to compute the final prediction. As modern systems scale to encompass billions of unique entities, these sparse embedding tables dominate the model parameter count compared to the consumer by several orders of magnitude.
\begin{figure}[H]
  \centering
  \includegraphics[width=\textwidth]{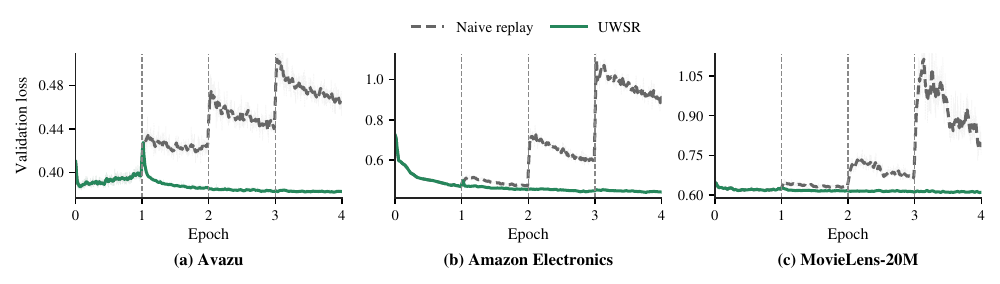}
  \caption{Validation Loss on Avazu, Amazon Electronics, and MovieLens-20M datasets comparing naive multi-epoch training with proposed UWSR method. Naive training displays one-epoch phenomenon of increasing validation loss at epoch boundary, remediated via UWSR.}
  \label{fig:one-epoch-three-datasets}
\end{figure}

While dense neural networks may overfit when trained for multiple epochs,
recommendation models often exhibit a peculiar failure at the start of the second epoch: out-of-sample performance deteriorates abruptly while training loss improves.
\citet{zhang2022oneepoch} termed this failure the \emph{one-epoch phenomenon}, hypothesizing that first-seen and revisited training examples induce fundamentally different joint distributions of embeddings and labels. They attribute this discrepancy to sparse features and the rapid adaptation of the shared consumer.
Figure~\ref{fig:one-epoch-three-datasets} demonstrates this phenomenon on three benchmarks.
To address this, mitigations typically intervene directly on the sparse parameters. Common strategies include resetting embedding tables between epochs \citep{meda2024} or attenuating row magnitudes  according to their occurrence frequencies \citep{adamar}.

In this paper, we analyze the one-epoch phenomenon through the lens of the \emph{prequential principle}. During the first epoch, training is \emph{prequential}: the forward pass scores each example before its label can influence the model parameters. During subsequent epochs, or replay, the model consumes embedding features whose state is already conditioned on the very target being predicted.
This causes an implicit augmentation in the loss function at the epoch boundary where the consumer relies on the stored information in the embeddings during replay. We call the resulting train-time change in the effective objective and optima \emph{self-influence asymmetry}.
We quantify this implicit change in the objective function and optima for an exactly solvable scalar model and a local neural approximation, showing the exacerbation of the effect with the reduction in the precision of the embeddings, explaining previous empirical observations of the one-epoch phenomenon being more severe in datasets whose embeddings have low support \citep{zhang2022oneepoch}.

Pathologies caused by breaking prequential training have long been recognized.
The generated-regressor problem
\citep{pagan1984generated} studies downstream regressions whose covariates are
themselves estimated.

For example, suppose a
student's exam score \(y\) is predicted from their mean test score \(x\), but
that mean includes the same exam score \(y\). The fitted relationship will
appear stronger than it is for a new exam excluded from the construction of
\(x\).
In the limiting case in which the mean contains only that exam,
\(x=y\). \citet{pagan1984generated} further shows that when this reuse cannot
be avoided, the estimation uncertainty in the generated regressor must itself
be propagated through the downstream fit, or inference on it will be
overconfident.
These results point to two complementary remediations: exclude
an outcome from the fitted inputs used to predict it, or, when exclusion is
impractical, account for the uncertainty in those inputs.
Modern supervised learning favors exclusion, using techniques like out-of-fold target encoding \citep{micci2001preprocessing}, ordered encodings \citep{prokhorenkova2018catboost}, and cross-fitting \citep{chernozhukov2018double} to ensure no observation constructs its own features. However, because deep recommendation models jointly optimize embeddings and downstream consumers, multi-epoch training is antithetical to this principle.

Our exact  scalar model and local neural analysis show that the resulting self-influence asymmetry is governed by the prediction's sensitivity to each fitted embedding row, and weighted by its uncertainty. This identifies a tractable quantity to control without interrupting end-to-end embedding training.
With these insights, we propose uncertainty-weighted sensitivity regularization (UWSR), a method that penalizes the downstream network's sensitivity to poorly determined sparse rows, making it explicitly costly to depend on  them.
UWSR computes this gradient penalty via standard double backpropagation \citep{drucker1992double}, acting as a functional regularizer on the consumer instead of the embeddings.
Thus, unlike direct sparse-parameter interventions, it counteracts the
network's incentive to exploit the previous influence of the label on the embeddings,  while leaving
them free to be optimized otherwise.

\paragraph{Contributions.}
In summary, our key contributions are:

\begin{enumerate}
    \item We show that \emph{self-influence asymmetry}, an effect echoing the
    classical generated-regressor problem, contributes to the one-epoch
    phenomenon. Using an exactly solvable scalar model and local rowwise analysis on deep recommendation models reveals a mismatch between the consumer a
    training example rewards and the consumer a fresh example rewards. We use an embedding-consumer-update-order intervention to verify this hypothesis.
    \item We derive UWSR from \emph{self-influence asymmetry}, which counteracts the mismatch and provide a computationally tractable method; and
    \item We empirically show that UWSR enables multi-epoch training across
    three datasets (MovieLens-20M, Amazon Electronics, and Avazu), consistently
    outperforming both single-epoch checkpoints and multi-epoch baselines from
    state-of-the-art methods.
\end{enumerate}

\section{Related Work}

\paragraph{The one-epoch phenomenon.}
The sharp degradation of deep recommendation models during the second epoch \citep{zhang2022oneepoch} has previously been attributed to distributional shifts in sparse embeddings and rapid network adaptation. However, this perspective conflates general embedding maturity with the leakage of the currently scored label. Prior experiments find that standard regularizers such as $\ell_2$ decay and embedding dropout do not remove the failure \citep{zhang2022oneepoch}; proposed remedies instead reset embedding tables \citep{meda2024} or attenuate their magnitudes \citep{adamar}.

\paragraph{Target exclusion and fitted representations.}
Established statistical frameworks recognize that target encoding leaks information if a representation includes the instance's own label \citep{micci2001preprocessing}. To prevent this, they explicitly enforce target exclusion using techniques such as ordered statistics \citep{prokhorenkova2018catboost} or cross-fitting \citep{chernozhukov2018double}.
Generated-regressor theory studies the downstream consequences of plugging in an estimated quantity \citep{pagan1984generated}. Although these fields motivate target exclusion and uncertainty propagation, they do not address the deep-learning setting in which intermediate representations (sparse embeddings) and their downstream consumers are jointly and repeatedly optimized.
\paragraph{Prequential prediction, influence, and leverage.}
Our theoretical framework builds on the prequential principle, which dictates that a model must generate a prediction before observing its target \citep{dawid1984prequential}. To quantify the exact mathematical consequences when this temporal separation breaks down, we draw upon influence functions, which decompose a fitted parameter into leave-one-out and self-influence components \citep{koh2017influence}. This self-influence is closely tied to statistical leverage and its extension to classification via generalized leverage \citep{pregibon1981logistic}, which formalizes how strongly an individual observation shifts its own fitted value.
Efron's covariance-penalty theory expresses in-sample optimism through the
covariance between a response and its fitted prediction
\citep{efron2004estimation}.

\paragraph{Input sensitivity and information-weighted parameter methods.}
The theoretical equivalence between additive input noise and derivative penalties is a classical result \citep{bishop1995training}, laying the groundwork for techniques such as double backpropagation and input-Jacobian regularization, which explicitly penalize a model's sensitivity to external inputs \citep{drucker1992double}. A parallel literature addresses parameter uncertainty: methods such as AdaGrad, K-FAC, and elastic weight consolidation use expected curvature or Fisher information to selectively scale parameter updates and penalize displacement \citep{duchi2011adagrad, martens2015kfac, kirkpatrick2017ewc}, while Bayesian neural networks achieve a similar goal through uncertainty sampling \citep{blundell2015weight}.
Our proposed UWSR bridges these approaches, applying input-sensitivity penalties not to external data, but to the model's own fitted, information-weighted representations.

\section{Self-Influence Asymmetry} \label{sec:analysis}
In this section, we demonstrate the mechanics of self-influence asymmetry and its dependence on the support of an embedding row. We first use an exactly solvable scalar model to exactly quantify this asymmetry and then generalize the concept to neural networks. Finally, we demonstrate the impact of this phenomenon using an embedding-consumer-update-order intervention experiment that deliberately breaks prequential training within the first epoch.

\subsection{Scalar Model of the Leave-In/Leave-Out Asymmetry}
\label{sec:scalar}

In this section, we use a scalar model to quantify self-influence asymmetry and how replay implicitly distorts the prequential training objective function, exacerbated as the support for an embedding reduces. We then propose a method to resolve this asymmetry by augmenting the loss function.

Consider an ordered dataset of interactions $\mathcal{D}=\{(x_t,y_t)\}_{t=1}^T$, where $x_t$ maps to a single scalar embedding $e_x$ in the embedding table $E$ and $y_t$ is the continuous target estimated as $\hat{y}_t = w e_{x_t}$, where $w$ represents the shared consumer.
For an entity $x$ that occurs $N_x$ times in $\mathcal{D}$, assume
$y_k = \mu_x + \epsilon_k$ for $k \in \{1, \dots, N_x\}$,
where $\mu_x$ denotes the latent signal associated with $x$, and $\epsilon_k$ denotes the noise specific to occurrence $k$, independent of $\mu_x$.
Assume both are zero-mean, with $\operatorname{Var}(\mu_x)=\tau^2$ and $\operatorname{Var}(\epsilon_k)=\nu^2$ and the noise instances $\{\epsilon_k\}$ are i.i.d.

To quantify self-influence asymmetry, we use a two-stage setup: we first fit embedding $e_x$ on observations of $x$ in $\mathcal{D}$, then hold it fixed to derive the optimal consumer $w$ across in-sample and fresh targets.
We fit the embedding $e_x$ using the empirical squared loss for entity $x$:
\[
L_{\mathrm{fit},x}(e)
=
\frac{1}{2N_x}\sum_{k=1}^{N_x}(y_k-e)^2,
\qquad
\hat{e}_x = \arg\min_{e\in\mathbb{R}}L_{\mathrm{fit},x}(e) = \frac{1}{N_x}\sum_{k=1}^{N_x} y_k = \mu_x+\bar{\epsilon}_x,
\]
where
$\bar{\epsilon}_x=N_x^{-1}\sum_{k\in\mathcal{K}_x}\epsilon_k$.
As $\epsilon_k$ is independent of $\mu_x$, $\mathbb{E}[\hat{e}_x\mid\mu_x]=\mu_x$, making $\hat{e}_x$ a noisy estimate of $\mu_x$, with estimation error $\hat{e}_x-\mu_x=\bar{\epsilon}_x$ and variance $\Sigma_x = \operatorname{Var}(\hat{e}_x-\mu_x) = \operatorname{Var}(\bar{\epsilon}_x) = \tfrac{\nu^2}{N_x}$.

Having characterized the fitted embedding, we now turn to study how strongly the consumer should rely on it, where $w=0$ ignores the embedding and $w=1$ uses it without attenuation.
Holding $\hat{e}_x$ fixed, we evaluate the reliance of consumer $w$ by comparing two targets scored using it: a fresh target $y_{\text{fresh}} = \mu_x + \epsilon_{\text{fresh}}$ ($\epsilon_{\text{fresh}}$ is i.i.d. with $\{\epsilon_k\}$) and a replayed target $y_{\text{replay}} = \mu_x + \epsilon_{\text{replay}}$ (chosen from $\mathcal{D}$, used to fit $\hat{e}_x$). We fit $w$ using the loss functions:
\[
L_{\mathrm{fresh},x}(w) =
\mathbb{E}\!\left[(y_{\mathrm{fresh}}-w\hat{e}_x)^2\right],
\qquad
L_{\mathrm{replay},x}(w) =
\mathbb{E}\!\left[(y_{\text{replay}}-w\hat{e}_x)^2\right].
\]

The two losses differ solely through the target and fitted embedding covariance.
For fresh targets, $\epsilon_{\mathrm{fresh}}$ is independent of $\hat{e}_x$, thus
\(\operatorname{Cov}(\hat{e}_x,y_{\mathrm{fresh}})
=
\operatorname{Cov}
\left(
\mu_x+\bar{\epsilon}_x,
\mu_x+\epsilon_{\mathrm{fresh}}
\right)
=
\operatorname{Var}(\mu_x)
=
\tau^2
\)
and,
\[
L_{\mathrm{fresh},x}(w)
=
(1-w)^2\tau^2+\nu^2+w^2\Sigma_x.
\]
For replay, $y_{\text{replay}}$ contributed to $\hat{e}_x$, hence
$\operatorname{Cov}(\hat{e}_x,y_{\text{replay}})
=
\tau^2+\operatorname{Cov}(\bar{\epsilon}_x,\epsilon_{\text{replay}})
=
\tau^2+\Sigma_x$ and,
\[
L_{\mathrm{replay},x}(w)
=
L_{\mathrm{fresh},x}(w)-2w\Sigma_x.
\]
Thus, for $w\geq 0$, replay obtains an additional loss reduction of
$2w\Sigma_x$ that is absent during fresh evaluation.
Solving first-order optimality conditions for $w$ for the above two loss functions yields,
\[
\boxed{w_{\mathrm{fresh},x}^{\star}
=
\frac{\tau^2}{\tau^2+\Sigma_x},
\qquad
w_{\mathrm{replay},x}^{\star}
=
1.}
\]
Thus, fresh evaluation favors shrinking the contribution of an uncertain embedding, while replay favors using the embedding without attenuation. If the consumer adopts the replay-optimal value
$w_{\mathrm{replay},x}^{\star}=1$, its excess loss on a fresh target is
\[
L_{\mathrm{fresh},x}(w_{\mathrm{replay},x}^{\star})
-
L_{\mathrm{fresh},x}(w_{\mathrm{fresh},x}^{\star})
=
\frac{\Sigma_x^2}{\tau^2+\Sigma_x}.
\]
Since $\Sigma_x=\nu^2/N_x$, both the difference between the two optimal
consumers and the resulting excess fresh loss grow with a decrease in the embedding support $N_x$.

\subsection{Generalizing to Neural Networks} \label{sec:nueral_analysis}

We now demonstrate self-influence asymmetry in neural recommendation systems. Consider an ordered dataset $\mathcal{D}=\{(x_t,y_t)\}_{t=1}^T$ with targets $y_t\in\{0,1\}$. The model estimates $\hat{y}_t$, the probability $y_t=1$, as $\hat{y}_t = \sigma(z_t) = (1 + \exp(-z_t))^{-1}$, governed by two parameterized components:
\begin{itemize}
    \item \textbf{Sparse Embedding Tables $\{E^{(F)}\}_{F=1}^M$}:
    A collection of $M$ embedding tables, with table $ E^{(F)}\in\mathbb{R}^{n_F\times d_F}$ containing $n_F$ rows of dimension $d_F$. For an input $x_t$, $\mathcal{A}(x_t)$ denotes the set of active table-row pairs and $E_{x_t}:=\{e_{F,i}:(F,i)\in\mathcal{A}(x_t)\}$ the retrieved embeddings.
    \item \textbf{Shared Consumer Network ($\w$)}: Combines $E_{x_t}$ and remaining continuous features to output a scalar logit $z_t = z(E_{x_t}, \w)$.
\end{itemize}
The model uses the binary cross-entropy (BCE) loss: $\ell(z_t,y_t) = -y_t\log\hat{y}_t-(1-y_t)\log(1-\hat{y}_t)$.
Following the two-stage abstraction for the scalar model, we first isolate the fitting of an embedding from the downstream consumer using a local row-wise analysis. For a fixed occurrence $t$ and active pair $(F,i) \in \mathcal{A}(x_t)$, we hold the remaining embeddings and consumer fixed.
Let $\mathcal{G}_{t^{-}}$ denote the pre-update information set containing $x_t$ and prior interactions, excluding $y_t$ and updates from $(x_t,y_t)$. The pre-update logit $z_t$ and prediction $\hat{y}_t$ are deterministic given $\mathcal{G}_{t^{-}}$. We define the conditionally optimal prediction $\hat{y}_t^\star = \Pr(y_t=1 \mid \mathcal{G}_{t^{-}})$, its logit $z_t^\star = \sigma^{-1}(\hat{y}_t^\star)$, and logit error $\Delta z_t = z_t - z_t^\star$.
To evaluate prediction costs, we introduce three core quantities to characterize the active embedding row $(F,i)$ and the network's reliance on it:
\begin{itemize}
    \item \textbf{Fitted Row \& Error:} Let $\hat{e}_{F,i}^-$ denote the fitted value of the active row in $\mathcal{G}_{t^-}$, and let $e_{F,i}^*$ denote the reference value yielding target logit $z_t^\star$. Thus, the pre-update row error is defined as $\Delta e_{F,i}^- = \hat{e}_{F,i}^- - e_{F,i}^*$.
    \item \textbf{Embedding Uncertainty:} We model row parameter uncertainty due to finite observations via a Laplace approximation with pre-update precision $\mathcal{I}_{F,i}^- > 0$ (the damped row-wise generalized Gauss-Newton information\footnote{The row-wise GGN forms the curvature block for $e_{F,i}$, scaling logit gradient outer products by BCE curvature. Damping guarantees invertibility, making $(\mathcal{I}_{F,i}^-)^{-1}$ a valid posterior covariance for rarely observed rows.}).
    We denote the expectation over this posterior as $\mathbb{E}_*$, thus the error covariance satisfies $\mathbb{E}_*[\Delta e_{F,i}^- (\Delta e_{F,i}^-)^\top] = (\mathcal{I}_{F,i}^-)^{-1}$.
    \item \textbf{Local Logit Sensitivity:} Extending beyond linear models where sensitivity is constant, we capture the neural consumer's local reliance on $e_{F,i}$ via the logit sensitivity vector $s_{F,i,t} = \nabla_{e_{F,i}} z(E_{x_t}, \w)\vert_{e_{F,i}=e_{F,i}^*}$. A first-order Taylor expansion yields $\Delta z_t \approx s_{F,i,t}^\top \Delta e_{F,i}^-$.
\end{itemize}
With this setup, we first characterize the cost of this embedding uncertainty during a fresh evaluation, and then show how replaying this occurrence exploits this embedding uncertainty.

\subsubsection{Implicit Penalization of Uncertain Embeddings in Prequential Training}
\label{sec:first-pass-exclusion}

This section demonstrates how prequential training naturally penalizes reliance on uncertain embedding rows, by implicitly introducing an excess loss penalty proportional to embedding uncertainty, thus showing that first-epoch training naturally penalizes the downstream network for relying on poorly determined representations.

For a fixed example $t$, consider the Taylor expansion of $\ell(z_t, y_t)$ around $z_t^*$:
\begin{equation} \label{eq:taylor_3.2.1_init}
\ell(z_t, y_t) - \ell(z_t^*, y_t) = (\hat{y}_t^* - y_t)\Delta z_t + \frac{1}{2}\hat{y}_t^*(1 - \hat{y}_t^*)(\Delta z_t)^2 + \mathcal{O}(\vert{}\Delta z_t\vert{}^3)
\end{equation}
As $\Delta z_t$ is deterministic given $\mathcal{G}_{t^-}$, and $y_t$ has conditional expectation $\mathbb{E}[y_t \mid \mathcal{G}_{t^-}] = \hat{y}_t^*$, the first-order interaction in \eqref{eq:taylor_3.2.1_init} gives
$\mathbb{E}[(\hat{y}_t^* - y_t)\Delta z_t \mid \mathcal{G}_{t^-}] = \Delta z_t (\hat{y}_t^* - \mathbb{E}[y_t \mid \mathcal{G}_{t^-}]) = 0$.
Taking the conditional expectation of \eqref{eq:taylor_3.2.1_init} while removing higher-order terms yields,
\begin{equation} \label{eq:taylor_conditional_3.2.1}
    \mathbb{E}[\ell(z_t, y_t) - \ell(z_t^*, y_t) \mid \mathcal{G}_{t^-}] \approx \frac{1}{2}\hat{y}_t^*(1 - \hat{y}_t^*)(\Delta z_t)^2 \ge 0
\end{equation}
Substituting $\Delta z_t \approx s_{F,i,t}^\top \Delta e_{F,i}^-$ and taking expectation $\mathbb{E}_*$ over the posterior row uncertainty
yields:
\begin{equation} \label{eq:delta_z_e_start}
    \mathbb{E}_*[(\Delta z_t)^2] \approx s_{F,i,t}^\top \mathbb{E}_*[\Delta e_{F,i}^- (\Delta e_{F,i}^-)^\top] s_{F,i,t} = s_{F,i,t}^\top (\mathcal{I}_{F,i}^-)^{-1} s_{F,i,t}
\end{equation}
Combining \eqref{eq:taylor_conditional_3.2.1} and \eqref{eq:delta_z_e_start} gives the expected local excess risk under first-epoch prequential training:
\begin{equation} \label{eq:first-pass-uncertainty-cost}
\boxed{
\mathbb{E}_\star\left[\mathbb{E}\!\left[
    \ell(z_t,y_t)-\ell(z_t^\star,y_t)
    \,\middle|\,
    \mathcal{G}_{t^{-}}
\right]\right]
\approx
\frac{1}{2}
\hat{y}_t^\star(1-\hat{y}_t^\star)
s_{F,i,t}^{\top}
(\mathcal{I}_{F,i}^{-})^{-1}
s_{F,i,t}
\geq 0.
}
\end{equation}
This local excess demonstrates that prequential training naturally exposes a positive quadratic penalty on downstream logit sensitivity. Predictions incur the highest excess loss when the shared consumer network exhibits high sensitivity $s_{F,i,t}$ toward poorly informed embedding directions, that is, rows where $\mathcal{I}_{F,i}^-$ is small.

\subsubsection{Replay-Induced Exploitation of Uncertain Embeddings}
\label{sec:self-influence}

In this subsection, we show how replay encourages the consumer to rely on uncertain embeddings and cause the self-influence asymmetry.
Let $r_t = y_t - \hat{y}_t$ denote the prediction residual, where $\hat{y}_t$ is deterministic given $\mathcal{G}_{t^-}$. When occurrence $t$ updates the model, it writes a label-aligned component into the active row $e_{F,i}$. We define this update as the self-induced row displacement, denoted by $\delta e_{F,i,t}^{\text{self}}$. When occurrence $t$ is subsequently replayed, the shared consumer scores this modified embedding row, shifting the output logit.
We define this output shift as the logit displacement, $u_t$.

To measure the change in loss post replay, let $y_t'$ denote an independent target label for $x_t$ which did not contribute to updating the embedding rows. Comparing the loss on the observed label $y_t$ against the expected loss on an independent target $y_t'$ defines the leave-in optimism as,
$$D_{F,i,t} = \ell(z_t + u_t, y_t) - \ell(z_t, y_t) - \mathbb{E}[\ell(z_t + u_t, y_t') - \ell(z_t, y_t') \mid \mathcal{G}_{t^-}, x_t, y_t].$$
We formalize the relationship between these quantities in the proposition to follow.

\begin{proposition}
\label{prop:leave_in_optimism}
In the local Generalized Gauss-Newton (GGN) row model with post-update row information $\mathcal{I}_{F,i} = \mathcal{I}_{F,i}^- + h_t s_{F,i,t} s_{F,i,t}^\top$ and variance scale $h_t = \hat{y}_t(1 - \hat{y}_t)$ evaluated at the model's own pre-update prediction, the following properties hold:
\begin{enumerate}
    \item The self-induced row displacement $\delta e_{F,i,t}^{\text{self}}$ and resulting logit shift $u_t$ satisfy
    \begin{equation}
        \delta e_{F,i,t}^{\text{self}} = r_t \mathcal{I}_{F,i}^{-1} s_{F,i,t}\quad \text{and} \quad u_t = r_t \|s_{F,i,t}\|_{\mathcal{I}_{F,i}^{-1}}^2
    \end{equation}

    \item For an independent target $y_t'\sim\mathrm{Bernoulli}(\hat y_t)$, the leave-in optimism $D_{F,i,t}$ and its conditional expectation satisfy
    \begin{equation} \label{eq:expected-leave-in-optimism}
        D_{F,i,t} = -r_t^2 \|s_{F,i,t}\|_{\mathcal{I}_{F,i}^{-1}}^2 \quad \text{and} \quad -\mathbb{E}[D_{F,i,t} \mid \mathcal{G}_{t^-}] = h_t \|s_{F,i,t}\|_{\mathcal{I}_{F,i}^{-1}}^2 \in [0, 1).
    \end{equation}
\end{enumerate}
\end{proposition}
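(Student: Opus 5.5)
The approach is to make the local GGN row model explicit as a single Laplace/Kalman (natural-gradient) update of the active row, and then use the fact that BCE is \emph{linear in the label} at fixed logit. With that, the leave-in optimism can be computed exactly, without any Taylor remainder.

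\textbf{Part 1.} Hold the consumer and the other rows fixed, and linearize the logit in the row: $z \approx z_t + s_{F,i,t}^\top(e_{F,i}-\hat e^-_{F,i})$. Under this linearization the negative log-likelihood contribution of occurrence $t$ has row gradient $-(y_t-\hat y_t)s_{F,i,t} = -r_t s_{F,i,t}$. Its GGN curvature is $h_t s_{F,i,t}s_{F,i,t}^\top$, with $h_t=\hat y_t(1-\hat y_t)$ evaluated at the pre-update prediction. Combining the Gaussian prior (precision $\mathcal{I}^-_{F,i}$) with this quadratic likelihood is one Newton step on the posterior. The posterior precision is therefore $\mathcal{I}_{F,i}=\mathcal{I}^-_{F,i}+h_t s s^\top$, and the mean moves by $\delta e^{\text{self}}_{F,i,t}=\mathcal{I}_{F,i}^{-1}(r_t s_{F,i,t})$. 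Here I will note that $\mathcal{I}_{F,i}$ does not depend on $y_t$, since $h_t$ uses $\hat y_t$. On replay the logit shifts by $u_t=s_{F,i,t}^\top\delta e^{\text{self}}_{F,i,t}=r_t\|s_{F,i,t}\|^2_{\mathcal{I}_{F,i}^{-1}}$. This uses the same first-order logit model, with $s$ taken at the linearization point.

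\textbf{Part 2, exact form of $D$.} Write $\ell(z,y)=\log(1+e^{z})-yz$. Then
\[
\ell(z_t+u_t,y)-\ell(z_t,y)=\bigl[\log(1+e^{z_t+u_t})-\log(1+e^{z_t})\bigr]-y\,u_t .
\]
The bracketed term is identical for $y_t$ and $y_t'$. We condition on $(\mathcal{G}_{t^-},x_t,y_t)$, which fixes $u_t$, and $y_t'\sim\mathrm{Bernoulli}(\hat y_t)$ is independent. Hence
\[
D_{F,i,t}=-(y_t-\mathbb{E}[y_t'])u_t=-r_t u_t=-r_t^2\|s_{F,i,t}\|^2_{\mathcal{I}_{F,i}^{-1}} .
\]

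\textbf{Part 2, expectation and range.} Only $r_t^2$ is random given $\mathcal{G}_{t^-}$. Under the model's own predictive, $y_t\sim\mathrm{Bernoulli}(\hat y_t)$, so $\mathbb{E}[r_t^2\mid\mathcal{G}_{t^-}]=\hat y_t(1-\hat y_t)=h_t$. This gives $-\mathbb{E}[D_{F,i,t}\mid\mathcal{G}_{t^-}]=h_t\|s\|^2_{\mathcal{I}^{-1}}$. For the range, put $q=s^\top(\mathcal{I}^-_{F,i})^{-1}s\ge0$. Sherman--Morrison gives $s^\top\mathcal{I}_{F,i}^{-1}s=q/(1+h_t q)$, so the quantity equals $h_tq/(1+h_tq)\in[0,1)$. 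Positivity of $\mathcal{I}^-$, which the damping guarantees, ensures the inverse exists.

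\textbf{Main obstacle.} The hardest step is the modelling step in Part 1: justifying that the ``local GGN row model'' means precisely the posterior-mean (Kalman) update with the \emph{post}-update precision, rather than a plain SGD step. I would state this as the defining assumption of the model. I would also flag a point that needs care: the expectation $\mathbb{E}[r_t^2]=h_t$ is taken under the model's predictive $\hat y_t$, matching the Bernoulli$(\hat y_t)$ assumption on $y_t'$, and not under $\hat y_t^\star$.
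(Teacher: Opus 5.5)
Your proof is correct and follows the paper's argument essentially step for step: the paper's local quadratic $\widetilde Q(\delta)=\tfrac12\delta^{\mathsf T}\mathcal I_{F,i}^-\delta-r_ts_{F,i,t}^{\mathsf T}\delta+\tfrac12h_t(s_{F,i,t}^{\mathsf T}\delta)^2$ is exactly your Gaussian-prior-plus-GGN-likelihood negative log-posterior, and it then uses the label-affine form $\ell(v,y)=\log(1+e^v)-yv$ and the Sherman--Morrison bound exactly as you do. On the point you flagged, the paper takes the expectation under the true conditional law given $\mathcal G_{t^-}$ and invokes local correct specification $\hat y_t=\hat y_t^\star$, which is what licenses your $y_t\sim\mathrm{Bernoulli}(\hat y_t)$; without that assumption, $\mathbb E[r_t^2\mid\mathcal G_{t^-}]=\hat y_t^\star(1-\hat y_t^\star)+(\hat y_t^\star-\hat y_t)^2$, which differs from $h_t$ in general.
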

Appendix~\ref{app:alignment} proves Proposition~\ref{prop:leave_in_optimism}.
Comparing \eqref{eq:first-pass-uncertainty-cost} and \eqref{eq:expected-leave-in-optimism} reveals the opposing incentives of prequential training and replay. Target exclusion penalizes logit sensitivity under $\mathcal{I}_{F,i,-t}^{-1}$, whereas self-inclusion rewards it under $\mathcal{I}_{F,i}^{-1}$.
As rare or low-support embedding rows accumulate minimal information $\mathcal{I}_{F,i}$, their inverse norm $\Vert{}\cdot\Vert{}_{\mathcal{I}_{F,i}^{-1}}^2$ is largest, yielding the highest generalized leverage during replay. As multi-epoch training actively breaks prequential training, it actively incentivizes the shared consumer network to inflate its score sensitivity $s_{F,i,t}$ toward poorly determined representations, exploiting target leakage to suppress training loss at the expense of generalization. UWSR is designed to restore the missing positive pressure during later epochs.

\subsection{Embedding-Consumer-Update-Order Intervention Experiment}
\label{sec:update-order}

In this section, we present empirical evidence that breaking same-label self-inclusion drives sensitivity inflation by inducing these conditions in the first epoch itself.
In ordinary training, both embeddings and consumer are updated within a single forward-backward pass.
We alter this process by performing two passes for each minibatch while matching the total updates of standard joint training.
In embedding$\rightarrow$consumer, only embeddings are updated in the first pass and the examples are rescored to update the consumer. In consumer$\rightarrow$embedding, this order is reversed\footnote{The experimental setup employed here is representative of modern recommendation systems, detailed in Section~\ref{sec:results}.}.

Intuitively, embedding-first writes a small, label-aligned trace into each active row before the consumer is trained. Rescoring then makes those examples easier to predict, encouraging the consumer to become more sensitive to the modified rows. This lowers training loss but provides no advantage on fresh examples. Consumer-first prevents the consumer from exploiting this trace, so we expect much smaller increases in sensitivity and validation loss.
Table~\ref{tab:update-order} reports results after a single epoch. While consumer-first remains close to ordinary training, embedding-first creates a large train loss advantage but substantially increases score sensitivity and worsens validation performance. Figure~\ref{fig:support-sensitivity} further shows this inflation is strongest for poorly supported rows. Together, these results demonstrate that same-label self-inclusion is sufficient to induce sensitivity inflation within the first epoch.

\begin{table}[t]
\centering
\footnotesize
\caption{One-epoch embedding-consumer-update-order intervention, averaged over two seeds.
$\Delta\mathrm{BCE}_{\mathrm{train}}$ is the reduction in same-minibatch
training loss after the first block update.
$\Delta\mathrm{BCE}_{\mathrm{val}}$ is the validation-loss reduction relative
to ordinary training; positive values indicate improvement.
$\Delta S$ is the relative change in primary-row logit sensitivity relative
to joint training.}
\label{tab:update-order}
\setlength{\tabcolsep}{5pt}
\begin{tabular}{@{}lrrrrrr@{}}
\toprule
& \multicolumn{3}{c}{Embedding$\rightarrow$Consumer}
& \multicolumn{3}{c}{Consumer$\rightarrow$Embedding} \\
\cmidrule(lr){2-4}\cmidrule(lr){5-7}
Dataset
& $\Delta\mathrm{BCE}_{\mathrm{train}}$
& $\Delta\mathrm{BCE}_{\mathrm{val}}$
& $\Delta S$
& $\Delta\mathrm{BCE}_{\mathrm{train}}$
& $\Delta\mathrm{BCE}_{\mathrm{val}}$
& $\Delta S$ \\
\midrule
MovieLens-20M
& $+1.43\%$ & $-2.90\%$ & $+54.0\%$
& $+0.19\%$ & $-0.21\%$ & $-0.1\%$ \\
Amazon Electronics
& $+10.93\%$ & $-13.14\%$ & $+474.3\%$
& $+0.22\%$ & $-0.44\%$ & $-4.8\%$ \\
Avazu
& $+1.73\%$ & $-1.86\%$ & $+56.4\%$
& $+0.15\%$ & $+0.31\%$ & $+4.6\%$ \\
\bottomrule
\end{tabular}
\end{table}

\begin{figure}[t]
  \centering
  \includegraphics[width=\textwidth]{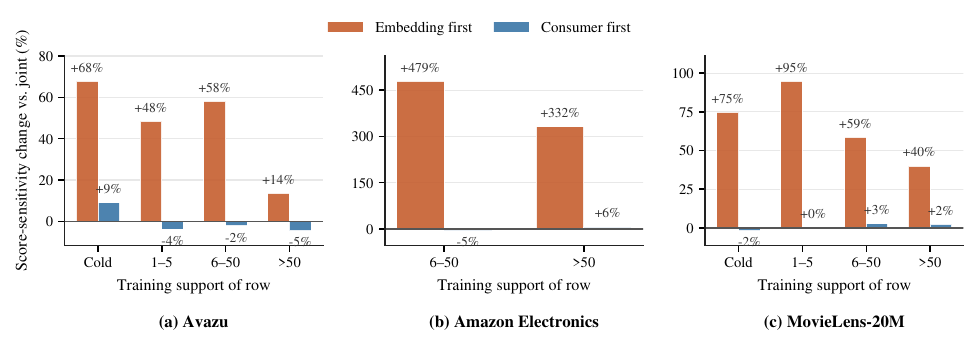}
  \vspace{-2em}
  \caption{
  Mean percentage change in primary-row logit sensitivity for embedding-consumer-update-order interventions compared to ordinary joint training for Avazu, Amazon Electronics and MovieLens-20M datasets grouped by row support.
  }
  \label{fig:support-sensitivity}
\end{figure}

\section{Uncertainty-Weighted Sensitivity Regularization}
In this section, we propose uncertainty-weighted sensitivity regularization (UWSR), which counteracts self-influence asymmetry by adding a functional regularizer on the dense consumer network, restoring the positive inverse-information penalty present during the first epoch.

The local self-inclusion advantage is identified as \(-r_t^2s_{F, i, t}^{\mathsf T}\mathcal I_{F, i}^{-1}s_{F, i, t}\) in \eqref{eq:expected-leave-in-optimism}.
As \(|r_t|\leq1\), its magnitude is bounded by \(s_{F, i, t}^{\mathsf T}\mathcal I_{F, i}^{-1}s_{F, i, t}\).
With \(s_{F,i}(x)=\nabla_{e_{F,i}}z(x)\), UWSR restores this positive pressure by adding the following penalty to the loss after the first epoch:
\begin{align}
    R(x)=\sum_{(F,i)\in\mathcal A(x)}
    s_{F,i}(x)^{\mathsf T}\widehat\Sigma_{F,i}s_{F,i}(x),
    \qquad
    \mathcal L_{\mathrm{UWSR}}(x,y)=\ell(z(x),y)+\lambda R(x),
  \label{eq:uwsr-objective}
\end{align}
where  \(\widehat\Sigma_{F,i}\) are the frozen inverse-information weights.
This is the same inverse-information geometry maintained by prequential training, as identified in \eqref{eq:first-pass-uncertainty-cost}.
To deploy UWSR in practice, we estimate \(\widehat\Sigma_{F,i}\) during epoch one, keep it fixed and apply the sensitivity penalty efficiently in subsequent epochs.

\begin{itemize}
\item \textbf{Estimation of \(\widehat\Sigma_{F,i}\)}.
We estimate \(\widehat\Sigma_{F,i}\) using the squared-gradient accumulator \citep{squisher} that adaptive optimizers
such as Adam already maintain, repurposed as an approximation of the
diagonal Fisher information. Rather than one scalar per parameter, we
accumulate one scalar per embedding row, the same row-wise reduction used by
row-wise Adagrad and row-wise Adam; Appendix~\ref{sec:construction} gives the
exact construction and its per-coordinate ablation. This is not
identical to the Fisher (\citet{squisher} characterize the exact differences
between the two), but across five applications it performs comparably to the
Fisher while requiring no additional computation, since the accumulator is
already computed as part of ordinary optimizer state. We adopt it as our
default estimator for this reason; the expected-Fisher and uniform-weight
variants are compared directly in Appendix~\ref{app:alternatives}.
\item \textbf{Computing \(s_{F,i}(x)\)}:
    This can be computed efficiently via double backpropagation \citep{drucker1992double} and second-order automatic differentiation, available in PyTorch via \texttt{create\_graph=True}.
\end{itemize}

\section{Results} \label{sec:results}
\paragraph{Evaluation Protocol}
We evaluate on MovieLens-20M \citep{harper2015movielens}, AmazonElectronics \citep{McAuley,zhou2018din}, and Avazu \citep{avazu-ctr-prediction} using training, validation, and test partitions. Validation selects all hyperparameters; a disjoint subset of the validation data is used only for the affine-calibration diagnostic. The test partition is not used for selection.
We report raw BCE and AUC, with AUC as the primary metric;
Appendix~\ref{app:data} gives dataset details.
The model's shared consumer is a compact WuKong-inspired interaction model \citep{zhang2024wukong} and  methods are trained for four epochs unless noted.
Training examples are independently reshuffled on every pass, using the same pass-specific permutation for every arm; the epoch-boundary effect therefore does not require replaying the same minibatch order. Appendix~\ref{app:init} provides further details.

\paragraph{Comparisons}
We compare one epoch, naive four-epoch replay, MEDA \citep{meda2024}, AdamAR \citep{adamar}, active-row AdamW, embedding dropout \citep{srivastava2014dropout}, and embeddings frozen until the final epoch. All methods other than AdamAR and active-row AdamW use the Adam \citep{kingma2015adam} optimizer.
Alongside UWSR, we evaluate UWSR Expected Fisher and UWSR Uniform, described in Appendix~\ref{app:alternatives}.
As multi-epoch arms use four times the gradient steps of the one-epoch baseline, the comparison asks whether additional training is useful.

\paragraph{Test-Set Performance} Table~\ref{tab:test} summarizes the empirical results, showing that UWSR shows consistent improvement across all datasets beyond one epoch against naive replay, the one-epoch checkpoint, and state-of-the-art methods. Further results on more architectures are summarized in Appendix~\ref{app:architecture-results}. The method also incurs minimal computational overhead (wall-clock cost in Appendix~\ref{app:compute}).

\paragraph{Training diagnosis} Figure~\ref{fig:batch-diagnostics} shows the impact of using UWSR on the validation loss, the logit sensitivity and active embedding norms against naive replay and AdamAR. While UWSR is able to maintain a low logit sensitivity and improvements in validation loss beyond one epoch, it does not place any direct restraints on the embeddings, allowing free growth of the embedding norms as shown.

\definecolor{uwsrgood}{HTML}{2A8F6A}
\definecolor{uwsrbad}{HTML}{C85B61}
\providecommand{\poscell}[2]{\cellcolor{uwsrgood!#1}#2}
\providecommand{\negcell}[2]{\cellcolor{uwsrbad!#1}#2}
\begin{table}[t]
\centering
\scriptsize
\caption{Test-set performance after each epoch, averaged over five seeds. Each endpoint uses the coefficient selected on validation at that endpoint. The right block gives the paired relative BCE reduction against the same seed's one-epoch checkpoint; positive values are better.
}
\label{tab:test}
\setlength{\tabcolsep}{3pt}
\renewcommand{\arraystretch}{0.94}
\begin{tabular}{@{}llrrrrrrr@{}}
\toprule
& & \multicolumn{4}{c}{Test AUC} & \multicolumn{3}{c}{$\Delta$BCE (\%)} \\
\cmidrule(lr){3-6}\cmidrule(lr){7-9}
Dataset & Method & E1 & E2 & E3 & E4 & E2 & E3 & E4 \\
\midrule
Amazon & Naive Replay & 0.8523 & \poscell{14}{0.8594} & \negcell{11}{0.8403} & \negcell{16}{0.8233} & \poscell{5}{$+0.02$} & \negcell{14}{$-28.51$} & \negcell{26}{$-95.43$} \\
 & MEDA & 0.8523 & \poscell{15}{0.8608} & \poscell{18}{0.8644} & \poscell{19}{0.8649} & \poscell{15}{$+2.54$} & \poscell{18}{$+3.62$} & \poscell{19}{$+3.92$} \\
 & Freeze until final epoch & 0.8523 & \negcell{8}{0.8469} & \negcell{7}{0.8483} & \negcell{9}{0.8460} & \negcell{6}{$-1.61$} & \negcell{5}{$-1.10$} & \negcell{6}{$-1.74$} \\
 & Active-row AdamW & 0.8523 & \poscell{14}{0.8594} & \negcell{25}{0.7852} & \negcell{26}{0.7811} & \poscell{5}{$+0.02$} & \negcell{12}{$-20.06$} & \negcell{13}{$-23.54$} \\
 & Embedding dropout & 0.8523 & \poscell{12}{0.8575} & \negcell{9}{0.8453} & \negcell{10}{0.8417} & \poscell{9}{$+0.72$} & \negcell{7}{$-4.03$} & \negcell{8}{$-7.16$} \\
 & AdamAR & 0.8523 & \poscell{18}{0.8640} & \poscell{20}{0.8660} & \poscell{19}{0.8656} & \poscell{18}{$+3.40$} & \poscell{17}{$+3.07$} & \poscell{17}{$+2.92$} \\
 & UWSR Uniform & 0.8523 & \poscell{14}{0.8591} & \poscell{10}{0.8556} & \negcell{5}{0.8516} & \poscell{13}{$+1.79$} & \negcell{5}{$-0.33$} & \negcell{5}{$-1.21$} \\
 & UWSR & 0.8523 & \poscell{22}{\textbf{0.8687}} & \poscell{25}{\textbf{0.8735}} & \poscell{26}{0.8754} & \poscell{22}{$+5.10$} & \poscell{25}{\textbf{$+6.43$}} & \poscell{26}{$+6.78$} \\
 & UWSR Expected Fisher & 0.8523 & \poscell{22}{\textbf{0.8687}} & \poscell{25}{0.8734} & \poscell{26}{\textbf{0.8756}} & \poscell{22}{\textbf{$+5.13$}} & \poscell{25}{\textbf{$+6.43$}} & \poscell{26}{\textbf{$+6.93$}} \\
\midrule
MovieLens & Naive Replay & 0.6766 & \negcell{12}{0.6436} & \negcell{18}{0.5926} & \negcell{26}{0.5097} & \negcell{7}{$-1.89$} & \negcell{12}{$-7.08$} & \negcell{26}{$-33.66$} \\
 & MEDA & 0.6766 & \poscell{11}{0.6800} & \negcell{6}{0.6711} & \negcell{7}{0.6707} & \poscell{18}{$+0.96$} & \poscell{6}{$+0.03$} & \poscell{9}{$+0.21$} \\
 & Freeze until final epoch & 0.6766 & \negcell{7}{0.6703} & \negcell{11}{0.6502} & \negcell{10}{0.6560} & \poscell{8}{$+0.13$} & \negcell{7}{$-1.39$} & \negcell{6}{$-0.90$} \\
 & Active-row AdamW & \negcell{4}{0.6762} & \poscell{16}{0.6840} & \poscell{14}{0.6821} & \poscell{7}{0.6777} & \poscell{14}{$+0.62$} & \poscell{16}{$+0.77$} & \poscell{7}{$+0.10$} \\
 & Embedding dropout & 0.6766 & \poscell{12}{0.6802} & \negcell{9}{0.6570} & \negcell{13}{0.6360} & \poscell{16}{$+0.81$} & \negcell{6}{$-0.67$} & \negcell{8}{$-2.63$} \\
 & AdamAR & 0.6766 & \negcell{5}{0.6756} & \negcell{8}{0.6661} & \negcell{5}{0.6752} & \poscell{10}{$+0.30$} & \negcell{6}{$-0.69$} & \negcell{6}{$-0.86$} \\
 & UWSR Uniform & 0.6766 & \poscell{18}{0.6858} & \poscell{25}{\textbf{0.6940}} & \poscell{14}{0.6820} & \poscell{17}{$+0.90$} & \poscell{26}{\textbf{$+2.00$}} & \poscell{20}{$+1.19$} \\
 & UWSR & 0.6766 & \poscell{26}{\textbf{0.6951}} & \poscell{21}{0.6892} & \poscell{14}{\textbf{0.6824}} & \poscell{26}{\textbf{$+1.97$}} & \poscell{25}{$+1.84$} & \poscell{21}{\textbf{$+1.38$}} \\
 & UWSR Expected Fisher & 0.6766 & \poscell{26}{0.6947} & \poscell{21}{0.6896} & \poscell{11}{0.6796} & \poscell{26}{$+1.94$} & \poscell{24}{$+1.75$} & \poscell{20}{$+1.18$} \\
\midrule
Avazu & Naive Replay & 0.7405 & \negcell{17}{0.7148} & \negcell{22}{0.6976} & \negcell{26}{0.6837} & \negcell{15}{$-4.51$} & \negcell{21}{$-9.25$} & \negcell{26}{$-13.85$} \\
 & MEDA & 0.7405 & \poscell{6}{0.7409} & \poscell{10}{0.7425} & \poscell{7}{0.7413} & \poscell{6}{$+0.08$} & \poscell{5}{$+0.03$} & \negcell{5}{$-0.11$} \\
 & Freeze until final epoch & 0.7405 & \negcell{6}{0.7388} & \poscell{7}{0.7411} & \negcell{5}{0.7401} & \negcell{6}{$-0.46$} & $+0.00$ & \negcell{5}{$-0.20$} \\
 & Active-row AdamW & \negcell{9}{0.7348} & \negcell{11}{0.7304} & \negcell{15}{0.7223} & \negcell{16}{0.7191} & \negcell{10}{$-1.82$} & \negcell{13}{$-3.79$} & \negcell{16}{$-5.76$} \\
 & Embedding dropout & 0.7405 & \negcell{22}{0.6992} & \negcell{23}{0.6961} & \negcell{23}{0.6947} & \negcell{15}{$-4.86$} & \negcell{17}{$-6.37$} & \negcell{19}{$-7.77$} \\
 & AdamAR & \poscell{19}{\textbf{0.7489}} & \poscell{18}{0.7481} & \poscell{20}{0.7497} & \poscell{20}{0.7501} & \poscell{19}{$+1.35$} & \poscell{20}{$+1.52$} & \poscell{19}{$+1.46$} \\
 & UWSR Uniform & 0.7405 & \poscell{18}{0.7480} & \poscell{16}{0.7470} & \poscell{5}{0.7407} & \poscell{18}{$+1.32$} & \poscell{18}{$+1.25$} & \poscell{11}{$+0.48$} \\
 & UWSR & 0.7405 & \poscell{25}{0.7554} & \poscell{26}{\textbf{0.7561}} & \poscell{25}{0.7547} & \poscell{25}{$+2.46$} & \poscell{25}{\textbf{$+2.43$}} & \poscell{24}{$+2.29$} \\
 & UWSR Expected Fisher & 0.7405 & \poscell{26}{\textbf{0.7561}} & \poscell{24}{0.7537} & \poscell{26}{\textbf{0.7559}} & \poscell{26}{\textbf{$+2.56$}} & \poscell{24}{$+2.21$} & \poscell{26}{\textbf{$+2.49$}} \\
\bottomrule
\end{tabular}
\end{table}

\begin{figure}[h]
\centering
\includegraphics[width=\textwidth]{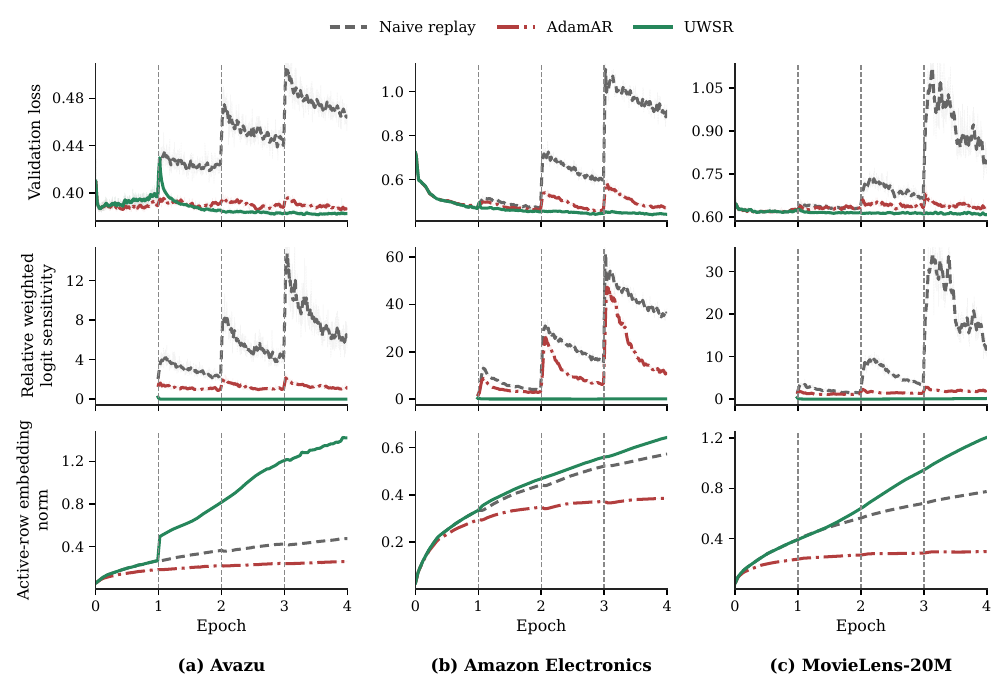}
\caption{
Validation loss and uncertainty-weighted logit
sensitivity ($s_{F,i}(x)^{\mathsf T}\widehat\Sigma_{F,i}s_{F,i}(x)$) measured via empirical Fisher approximation (Appendix~\ref{app:alternatives}) and norm of active embeddings each training step on Avazu, Amazon Electronics, and MovieLens-20M for a single seed. When compared to baseline methods, UWSR is able to control the logit sensitivity, while allowing free growth of embedding norms and show improvements beyond one epoch.
}
\label{fig:batch-diagnostics}
\end{figure}

\section{Conclusion}
Multi-epoch training in sparse-embedding recommendation models suffers from self-influence asymmetry: later passes score examples through rows containing target leakage written by their own earlier updates. Our scalar model, local row analysis, and embedding-consumer update order intervention show that this prequential breakdown creates an incentive for the shared consumer to rely on rarely occurring rows. To resolve this without direct interventions on embeddings, we propose uncertainty-weighted sensitivity regularization (UWSR), which uses double backpropagation weighted by frozen inverse row information to constrain target sensitivity on poorly determined rows.

Our analysis is rowwise and assumes independent embedding updates, meaning it does not model cross-row interactions. Additionally, the formulation considers fixed-window replay, so adapting the method to continual learning would require dynamically updating the row information matrices as new independent examples arrive.

Finally, while our embedding-consumer update order intervention demonstrates that within a minibatch, self-inclusion is sufficient to trigger target sensitivity escalation, further work is needed to measure its exact contribution relative to other factors driving the overall multi-epoch degradation.

\section*{AI Use Disclosure}
In this work, we used generative AI tools to assist with editing and refining the academic writing, as well as to help draft and structure the source code for the algorithms. We did not use generative AI tools to formulate the core scientific concepts, mathematical claims, or experimental designs. The authors have carefully reviewed all AI-assisted content and take full responsibility for it.

\section*{Reproducibility Statement}
To ensure the complete reproducibility of our experiments, all datasets and model architectures utilized in this work are entirely publicly available. Detailed descriptions of the experimental setup, data pre-processing steps, and hyperparameter configurations have been included. The source code implementing our algorithms and training pipelines will be made publicly available upon publication.

\section*{Ethics Statement}
The research presented in this paper focuses on foundational machine learning methodologies. Our experiments rely exclusively on standard, publicly available datasets, and the work does not involve human subjects, personally identifiable information, or sensitive topics. To the best of our knowledge, there are no immediate ethical concerns or direct risks of negative societal impact arising from the specific algorithms and results discussed in this work.

\bibliographystyle{iclr2027_conference}
\bibliography{references}

\clearpage
\appendix
\section{Proof of Proposition 1}
\label{app:alignment}

\begin{proof}[Proof of Proposition~\ref{prop:leave_in_optimism}]

Conditioned on the pre-update reference state $\mathcal G_{t^-}$, the quantities $\mathcal I_{F,i}^-$, $s_{F,i,t}$, and $\hat y_t$ are deterministic. For a displacement $\delta$ to row $e_{F,i}$,  let $\widetilde Q(\delta)$ be a local quadratic approximation for the change in the loss function. Thus,

\begin{align*}
    \widetilde Q(\delta)
    =\tfrac12\delta^{\mathsf T}\mathcal I_{F,i}^-\delta
     -r_ts_{F,i,t}^{\mathsf T}\delta
     +\tfrac12h_t(s_{F,i,t}^{\mathsf T}\delta)^2 .
\end{align*}
The first-order optimality condition on $\widetilde Q(\delta)$, $\mathcal I_{F,i}\delta = r_t s_{F,i,t}$, gives $\delta e_{F,i,t}^{\mathrm{self}}=r_t\mathcal I_{F,i}^{-1}s_{F,i,t}$ and $u_t=r_t\|s_{F,i,t}\|_{\mathcal I_{F,i}^{-1}}^2$, completing the proof of part one.

Defining $a_t=\|s_{F,i,t}\|_{(\mathcal I_{F,i}^-)^{-1}}^2$, the Sherman-Morrison
identity gives
\begin{align*}
    \|s_{F,i,t}\|_{\mathcal I_{F,i}^{-1}}^2=\frac{a_t}{1+h_ta_t}
    \qquad
    \text{and}
    \qquad
    h_t\|s_{F,i,t}\|_{\mathcal I_{F,i}^{-1}}^2=\frac{h_ta_t}{1+h_ta_t}\in[0,1).
\end{align*}

As BCE loss is affine in its label, $\ell(v,y)=\log(1+e^v)-yv$.
Consequently, for any $u$,

\begin{align*}
 \E\left[\ell(z_t+u,y_t) - \ell(z_t+u,y_t')\mid \mathcal{G}_{t^-}, x_t, y_t \right]
  -\E\left[\ell(z_t,y_t)- \ell(z_t,y_t') \mid \mathcal{G}_{t^-}, x_t, y_t\right]
  =-(y_t-\hat y_t)u.
\end{align*}
Substituting $u=u_t$ proves
$D_{F,i,t}=-r_t^2\|s_{F,i,t}\|_{\mathcal I_{F,i}^{-1}}^2$, and the pointwise
bound follows from $|r_t|\leq1$. Under local correct specification
$\hat y_t=\hat y_t^\star$, $\E[r_t^2\mid\mathcal G_{t^-}]=\hat
y_t^\star(1-\hat y_t^\star)=h_t$, which yields the expected identity in part 2.
\end{proof}

\section{UWSR Construction}

\subsection{Method details}
\label{sec:construction}

For sigmoid BCE, the rowwise expected Fisher and generalized Gauss--Newton
information coincide. We instantiate their rowwise scale with an AdaGrad-style
cumulative gradient-energy statistic \citep{duchi2011adagrad}. Let
$g_{F,i}^{B}$ denote the row gradient of the minibatch loss for row $(F,i)$ of
table $F$, whose embeddings have dimension $d_F$:
\begin{align}
    \widehat{\mathcal I}_{F,i}
    &:=\sum_{B:\,i\in\mathcal A(B)}
      \frac{\lVert g_{F,i}^{B}\rVert_2^2}{d_F},
    \qquad
    \widetilde{\mathcal I}_{F,i}
    :=\begin{cases}
      (1-\gamma_{\mathrm{shr}})\widehat{\mathcal I}_{F,i}
        +\gamma_{\mathrm{shr}}\,\overline{\mathcal I}_{F}, & i\in\mathcal O,\\
      \overline{\mathcal I}_{F}, & i\notin\mathcal O,
    \end{cases}
    \label{eq:rowwise-information}\\
    \epsilon_F&:=10^{-6}\overline{\mathcal I}_{F},
    \qquad
    \rho_{F,i}:=\frac{1}{\max\{\widetilde{\mathcal I}_{F,i},\epsilon_F\}},
    \notag\\
    c&:=\frac{1}{|\mathcal O|}\sum_{(F,i)\in\mathcal O}\rho_{F,i},
    \qquad
    \widehat\Sigma_{F,i}:=\frac{\rho_{F,i}}{c}I,
    \label{eq:rowwise-normalization}
\end{align}
where \(\mathcal O\) is the set of rows observed during the first pass,
\(\overline{\mathcal I}_F\) is the mean accumulated information among observed
rows of table \(F\), and \(\gamma_{\mathrm{shr}}\) is a one-time table-level
shrinkage coefficient. The accumulator has no exponential decay or bias
correction: inactive rows simply receive zero additional information. The
resulting weights are frozen after the first pass.

The minibatch construction preserves the Fisher/GGN scale needed here. Let
$g_{F,i,t}$ be occurrence $t$'s row gradient. Writing
$g_{F,i}^{B}=|B|^{-1}\sum_{t\in B_i}g_{F,i,t}$, the usual Fisher score identity
$\E[g_{F,i,t}\mid x_t]=0$ and conditional independence make
cross-occurrence terms vanish, so
\begin{equation}
  \E\!\left[\lVert g_{F,i}^{B}\rVert_2^2\mid B_i\right]
  =|B|^{-2}\sum_{t\in B_i}
    \E\!\left[\lVert g_{F,i,t}\rVert_2^2\mid x_t\right].
  \label{eq:minibatch-information}
\end{equation}
The fixed batch-reduction factor is removed by the normalization in
Equation~\eqref{eq:rowwise-normalization}; accumulation across batches measures
information along the same first-pass trajectory that fits the rows.

The scalarization has a direct optimizer analogue. UWSR uses cumulative
row-wise AdaGrad accumulation in time; the same one-scalar-per-row reduction is
also common in row-wise Adam implementations. UWSR reuses the AdaGrad statistic
in a dual role: the optimizer scales a row update by its inverse square root,
whereas UWSR weights downstream sensitivity by its inverse because it is
propagating row uncertainty:
\[
  \Delta e_{F,i}^{\mathrm{AdaGrad}}
    \propto-\widehat{\mathcal I}_{F,i}^{-1/2}g_{F,i},
  \qquad
  R_{F,i}\propto\big\lVert
    \widehat{\mathcal I}_{F,i}^{-1/2}s_{F,i}
  \big\rVert_2^2.
\]
A per-coordinate UWSR construction gave
essentially identical performance; Appendix~\ref{app:accumulator-granularity} reports
the comparison.

We also evaluate UWSR Expected Fisher, which replaces the realized gradient second
moment with the occurrence-level conditional moment
\(\hat{y}_t(1-\hat{y}_t)\lVert s_{F,i}(x_t)\rVert_2^2/d_F\). A uniform ablation sets
\(\widehat\Sigma_{F,i}=I\).

The penalty requires second-order autodifferentiation through
\(s_{F,i}(x_t)\) (for example, \texttt{create\_graph=True}). The variants
differ only in how they estimate row uncertainty and have nearly identical
computational cost. Their close agreement in Table~\ref{tab:test} shows that
the result is insensitive to choosing the realized or conditional score
moment.

  \subsection{UWSR Training Procedure}
  \label{app:uwsr-algorithm}

  \begin{algorithm}[h]
  \caption{UWSR with a row-wise empirical-Fisher accumulator}
  \label{alg:uwsr}
  \small
  \begin{algorithmic}[1]
  \Require Ordered minibatches $\mathcal D$; epochs $K$; consumer parameters
  $\theta$; embedding tables $\{E_F\}$; penalty strength $\lambda$; shrinkage
  $\gamma_{\mathrm{shr}}$
  \State Initialize $S_{F,i}\gets 0$ for every embedding row
  \For{$q=1,\ldots,K$}
    \For{minibatch $B\in\mathcal D$}
      \State Retrieve occurrence-level lookup outputs
      $a_{b,F,i}=E_F[i]$ for $(F,i)\in\mathcal A(x_b)$
      \State Compute logits $z_b=z_\theta(x_b)$ and
      $L_{\mathrm{BCE}}=|B|^{-1}\sum_{b\in B}\ell(z_b,y_b)$
      \If{$q=1$}
        \ForAll{rows $(F,i)$ activated by $B$}
          \State $g_{B,F,i}\gets\nabla_{E_F[i]}L_{\mathrm{BCE}}$
          \State $S_{F,i}\gets S_{F,i}
          +\operatorname{stopgrad}\!\left(
          \lVert g_{B,F,i}\rVert_2^2/d_F\right)$
        \EndFor
        \State $L\gets L_{\mathrm{BCE}}$
      \Else
        \ForAll{$b\in B$ and $(F,i)\in\mathcal A(x_b)$}
          \State $s_{b,F,i}\gets\nabla_{a_{b,F,i}}z_b$
          \Comment{retain the derivative graph}
        \EndFor
        \State $R_B\gets |B|^{-1}\sum_{b\in B}
        \sum_{(F,i)\in\mathcal A(x_b)}
        u_{F,i}\lVert s_{b,F,i}\rVert_2^2$
        \State $L\gets L_{\mathrm{BCE}}+\lambda R_B$
      \EndIf
      \State Update $\theta$ and $\{E_F\}$ using $\nabla L$
    \EndFor
    \If{$q=1$}
      \ForAll{embedding tables $F$}
        \State $\overline{\mathcal I}_F\gets
        |\mathcal O_F|^{-1}\sum_{i\in\mathcal O_F}S_{F,i}$
        \ForAll{rows $i$ in table $F$}
          \State $\widetilde{\mathcal I}_{F,i}\gets
          (1-\gamma_{\mathrm{shr}})S_{F,i}
          +\gamma_{\mathrm{shr}}\overline{\mathcal I}_F$
          if $i\in\mathcal O_F$
          \State $\widetilde{\mathcal I}_{F,i}\gets
          \overline{\mathcal I}_F$ otherwise
          \State $\rho_{F,i}\gets
          \max\{\widetilde{\mathcal I}_{F,i},
          10^{-6}\overline{\mathcal I}_F\}^{-1}$
        \EndFor
      \EndFor
      \State $c\gets|\mathcal O|^{-1}
      \sum_{(F,i)\in\mathcal O}\rho_{F,i}$
      \State $u_{F,i}\gets\rho_{F,i}/c$ for every row; freeze all $u_{F,i}$
    \EndIf
  \EndFor
  \end{algorithmic}
  \end{algorithm}

\subsection{Alternative \(\Sigma\) constructions}
\label{app:alternatives}

\textbf{Table-Level Shrinkage and Weight Normalization:} For numerical stability during practical experiments, shrinkage was applied. We have not found it sensitive in informal ablations.  Differing non-semantically, the scale of all weight coefficients is normalized such that tuned \(\lambda\) will be closer to 1.

\paragraph{Expected-Fisher variant.}
The expected-Fisher variant accumulates the occurrence-level quantity
$p_t(1-p_t)\lVert s_{F,i}(x_t)\rVert_2^2/d_F$ before row aggregation, where \(p_t\) is the model's prediction for sample \(t\). It uses
the same shrinkage, normalization, and later-pass objective as standard UWSR.

\paragraph{Uniform sensitivity (unweighted ablation).}
$\widehat\Sigma_{F,i}=I$, with any shared scalar absorbed into $\lambda$. This
is the unweighted embedding-row score-sensitivity penalty and carries no row-
or table-specific information. Its squared-derivative form is related to
double backpropagation and input-Jacobian regularization
\citep{drucker1992double}.

\section{Dataset Details}
\label{app:data}

\paragraph{MovieLens-20M.} All 20,000,263 ratings, predicting whether a rating
is at least four stars. The examples are sorted by timestamp and split into
contiguous 80\% training and 5\% validation, calibration, monitoring, and
held-out windows, over
138,493 users and 26,744 items. Inputs are user and item identifiers, 19
genres, release year, training-only support, availability indicators, and 1,128
genome-tag relevance features. In the monitoring window approximately 85\% of
examples have a user unseen in training and 17\% an unseen item.

\paragraph{AmazonElectronics.} The official training file provides
2,608,764 binary examples; the official test file is divided into validation,
calibration, monitoring, and held-out partitions of approximately 96,200 each.
Inputs
are user, target item, target category, and mean-pooled item and category
histories truncated to length 50, over 192,403 users, 63,001 items and 801
categories. Monitoring users and items all occur in training.

\paragraph{Avazu.} All 40,428,967 binary click examples in the RecZoo
preprocessing. The official training file provides 28,300,276 examples; the
labeled evaluation files are concatenated in original order and divided into
validation, calibration, monitoring, and held-out partitions of approximately
3.03 million each. Each of the 22
categorical fields has a separate embedding table, with cardinalities from 5 to
1,048,284, and evaluation-time unseen rates vary by field and reach 34.6\%.

\section{Model, Training and Hyperparameter Details}
\label{app:init} \label{app:training-details}

\paragraph{Model Architecture.}
Sparse tables use 32-dimensional embeddings. The shared consumer is a WuKong-inspired interaction model \citep{zhang2024wukong} containing three residual interaction blocks, four rank-16 factorization heads in each block, followed by a 128-unit SiLU output network.

\paragraph{Model Initialization.} Sparse tables are zero-initialized, with symmetry broken by the linear
projection path of the interaction block; the factorization heads contribute no
embedding gradient at exact zero.

\paragraph{Hyperparameter search.}
The learning rate is $2\times10^{-3}$ and the batch size is 4,096. We select penalty coefficients using the validation partition at \(\sqrt{10}\) interval spacing.

\section{Validation Results and Additional Comparisons}
\label{app:validation-results}

\begin{table}[H]
\centering
\small
\caption{Validation $\Delta$BCE (\%) at epoch four for algorithms compared in Table~\ref{tab:test}. Positive values
denote improvement over the corresponding one-pass model.}
\label{tab:validation-appendix}
\setlength{\tabcolsep}{7pt}
\begin{tabular}{@{}lrrr@{}}
\toprule
Method & MovieLens & Amazon & Avazu \\
\midrule
Naive replay & $-24.66$ & $-101.03$ & $-15.71$ \\
MEDA & $-0.93$ & $+3.37$ & $-0.06$ \\
Freeze embeddings & $-1.89$ & $-1.45$ & $+0.06$ \\
AdamAR & $-0.65$ & $+2.19$ & $+1.96$ \\
UWSR Uniform sensitivity (unweighted ablation) & $+0.40$ & $-1.05$ & $+2.50$ \\
UWSR, rowwise & $+1.47$ & $+6.77$ & $+3.97$ \\
UWSR Expected Fisher, rowwise & $+1.43$ & $+6.95$ & $+4.11$ \\
\bottomrule
\end{tabular}
\end{table}

\begin{table}[t]
\centering
\scriptsize
\caption{Paired test-set changes at epoch four, averaged over five seeds. Brackets give two-sided 95\% $t$-intervals across
paired seed-level differences from the corresponding one-pass Adam checkpoint;
positive values are improvements.}
\label{tab:e4-confidence-intervals}
\setlength{\tabcolsep}{3.2pt}
\begin{tabular}{@{}lrrr@{}}
\toprule
Method & MovieLens & Amazon & Avazu \\
\midrule
\multicolumn{4}{@{}l}{\textit{(a) Relative $\Delta$BCE (\%) with 95\% CI}} \\
Naive replay & $-33.66$ {\scriptsize $[-45.12,-22.20]$} & $-95.43$ {\scriptsize $[-102.76,-88.10]$} & $-13.85$ {\scriptsize $[-15.57,-12.14]$} \\
MEDA & $+0.21$ {\scriptsize $[-2.37,+2.79]$} & $+3.92$ {\scriptsize $[+3.23,+4.62]$} & $-0.11$ {\scriptsize $[-0.58,+0.37]$} \\
Freeze until final pass & $-0.90$ {\scriptsize $[-3.60,+1.81]$} & $-1.74$ {\scriptsize $[-3.07,-0.42]$} & $-0.20$ {\scriptsize $[-0.81,+0.40]$} \\
Active-row AdamW & $+0.10$ {\scriptsize $[-2.45,+2.65]$} & $-23.54$ {\scriptsize $[-25.58,-21.51]$} & $-5.76$ {\scriptsize $[-7.19,-4.34]$} \\
Embedding dropout & $-2.63$ {\scriptsize $[-4.36,-0.90]$} & $-7.16$ {\scriptsize $[-8.82,-5.50]$} & $-7.77$ {\scriptsize $[-10.72,-4.82]$} \\
AdamAR & $-0.86$ {\scriptsize $[-4.81,+3.10]$} & $+2.92$ {\scriptsize $[+2.08,+3.75]$} & $+1.46$ {\scriptsize $[+0.86,+2.05]$} \\
UWSR Uniform & $+1.19$ {\scriptsize $[-1.39,+3.78]$} & $-1.21$ {\scriptsize $[-1.84,-0.58]$} & $+0.48$ {\scriptsize $[-0.19,+1.15]$} \\
UWSR & $+1.38$ {\scriptsize $[-0.89,+3.64]$} & $+6.78$ {\scriptsize $[+6.17,+7.40]$} & $+2.29$ {\scriptsize $[+1.66,+2.92]$} \\
UWSR Expected Fisher & $+1.18$ {\scriptsize $[-1.36,+3.73]$} & $+6.93$ {\scriptsize $[+6.41,+7.46]$} & $+2.49$ {\scriptsize $[+1.90,+3.08]$} \\
\midrule
\multicolumn{4}{@{}l}{\textit{(b) $\Delta$AUC with 95\% CI}} \\
Naive replay & $-0.1669$ {\scriptsize $[-0.1879,-0.1458]$} & $-0.0290$ {\scriptsize $[-0.0323,-0.0258]$} & $-0.0568$ {\scriptsize $[-0.0635,-0.0501]$} \\
MEDA & $-0.0060$ {\scriptsize $[-0.0265,+0.0146]$} & $+0.0126$ {\scriptsize $[+0.0084,+0.0168]$} & $+0.0008$ {\scriptsize $[-0.0032,+0.0048]$} \\
Freeze until final pass & $-0.0207$ {\scriptsize $[-0.0450,+0.0037]$} & $-0.0063$ {\scriptsize $[-0.0112,-0.0015]$} & $-0.0004$ {\scriptsize $[-0.0031,+0.0023]$} \\
Active-row AdamW & $+0.0010$ {\scriptsize $[-0.0201,+0.0222]$} & $-0.0712$ {\scriptsize $[-0.0742,-0.0682]$} & $-0.0214$ {\scriptsize $[-0.0245,-0.0184]$} \\
Embedding dropout & $-0.0407$ {\scriptsize $[-0.0591,-0.0222]$} & $-0.0106$ {\scriptsize $[-0.0124,-0.0089]$} & $-0.0458$ {\scriptsize $[-0.0577,-0.0339]$} \\
AdamAR & $-0.0014$ {\scriptsize $[-0.0193,+0.0165]$} & $+0.0133$ {\scriptsize $[+0.0093,+0.0174]$} & $+0.0096$ {\scriptsize $[+0.0064,+0.0128]$} \\
UWSR Uniform & $+0.0054$ {\scriptsize $[-0.0209,+0.0317]$} & $-0.0007$ {\scriptsize $[-0.0035,+0.0021]$} & $+0.0002$ {\scriptsize $[-0.0044,+0.0047]$} \\
UWSR & $+0.0058$ {\scriptsize $[-0.0165,+0.0281]$} & $+0.0231$ {\scriptsize $[+0.0195,+0.0267]$} & $+0.0141$ {\scriptsize $[+0.0121,+0.0162]$} \\
UWSR Expected Fisher & $+0.0030$ {\scriptsize $[-0.0276,+0.0335]$} & $+0.0232$ {\scriptsize $[+0.0197,+0.0268]$} & $+0.0154$ {\scriptsize $[+0.0126,+0.0182]$} \\
\bottomrule
\end{tabular}
\end{table}

\section{Accumulator granularity ablation}
\label{app:accumulator-granularity}
\begin{table}[h]
\centering
\small
\caption{Accumulator-granularity ablation on Amazon Electronics after four
epochs (seed 7). Both variants selected $\lambda=0.03$ on validation. Auxiliary
state excludes model parameters and optimizer state.}
\label{tab:accumulator-granularity}
\setlength{\tabcolsep}{6pt}
\begin{tabular}{@{}lrrrr@{}}
\toprule
Accumulator & Scalars per row & $\lambda$ & Validation BCE $\downarrow$ & AUC $\uparrow$ \\
\midrule
Row-wise scalar & $1$ & $0.03$ & $\mathbf{0.442510}$ & $\mathbf{0.876256}$ \\
Per-coordinate diagonal & $32$ & $0.03$ & $0.442851$ & $0.875963$ \\
\bottomrule
\end{tabular}
\end{table}

\section{Architecture Results}
\label{app:architecture-results}

\begin{table}[H]
\centering
\small
\caption{Amazon validation results across WuKong, DCNv2
\citep{wang2021dcnv2}, and AutoInt \citep{song2019autoint}. Entries are
$\Delta$BCE relative to each architecture's one-pass baseline.}
\label{tab:arch}
\setlength{\tabcolsep}{3.5pt}
\begin{tabular}{@{}lrrrrrrrr@{}}
\toprule
& One pass & \multicolumn{6}{c}{$\Delta$BCE} & Fisher \\
\cmidrule(lr){3-8}
Architecture & BCE & Naive & MEDA & Freeze & AdamAR & Isotropic & UWSR & $\Delta$AUC \\
\midrule
WuKong & 0.474634 & $-101.03$ & $+3.37$ & $-1.45$ & $+2.19$ & $-1.05$ & $\mathbf{+6.95}$ & $+0.0222$ \\
DCNv2 & 0.507843 & $-45.23$ & $+6.71$ & $+3.08$ & $+5.90$ & $+2.26$ & $\mathbf{+9.02}$ & $+0.0362$ \\
AutoInt & 0.491371 & $-57.86$ & $+5.29$ & $-3.35$ & $+5.56$ & $-0.97$ & $\mathbf{+8.58}$ & $+0.0303$ \\
\bottomrule
\end{tabular}
\end{table}

\section{Computational Cost}
\label{app:compute}

\begin{table}[H]
\centering
\small
\caption{Amazon computational overhead relative to naive replay four-pass training, mean 2 seeds.
Later-epoch step time is relative to one naive replay step; end-to-end
overhead includes all epochs and method-specific setup.}
\label{tab:compute}
\setlength{\tabcolsep}{7pt}
\begin{tabular}{@{}lrrr@{}}
\toprule
Method & Epochs & Later-epoch step time & End-to-end overhead \\
\midrule
Naive replay / MEDA / AdamAR & 4 & $1.00\times$ & $1.00\times$ \\
UWSR & 4 & $1.18\times$ & $1.13\times$ \\
\bottomrule
\end{tabular}
\end{table}

The paired arms ran sequentially on otherwise idle GPUs with diagnostics and candidate caching disabled. Per-seed later-pass ratios are 1.16 and 1.20; end-to-end ratios are 1.12 and 1.14.

\end{document}